\title{\LARGE \bf
	NeRF-VIO: Map-Based Visual-Inertial Odometry with Initialization Leveraging Neural Radiance Fields
}
\author{Yanyu Zhang, Dongming Wang, Jie Xu, Mengyuan Liu, Pengxiang Zhu, Wei Ren
\thanks{Y. Zhang, D. Wang, J. Xu, P. Zhu, and W. Ren are with the Department of Electrical and Computer Engineering, University of California, Riverside, CA, 92521, USA. Email: \{yzhan831, wdong025, jxu150, pzhu008, weiren\}@ucr.edu.}%
}
\begin{document}

\maketitle
\thispagestyle{empty}
\pagestyle{empty}

\begin{abstract}
A prior map serves as a foundational reference for localization in context-aware applications such as augmented reality (AR). Providing valuable contextual information about the environment, the prior map is a vital tool for mitigating drift. In this paper, we propose a map-based visual-inertial localization algorithm (NeRF-VIO) with initialization using neural radiance fields (NeRF). Our algorithm utilizes a multilayer perceptron model and redefines the loss function as the geodesic distance on \(SE(3)\), ensuring the invariance of the initialization model under a frame change within \(\mathfrak{se}(3)\). The evaluation demonstrates that our model outperforms existing NeRF-based initialization solution in both accuracy and efficiency. By integrating a two-stage update mechanism within a multi-state constraint Kalman filter (MSCKF) framework, the state of NeRF-VIO is constrained by both captured images from an onboard camera and rendered images from a pre-trained NeRF model. The proposed algorithm is validated using a real-world AR dataset, the results indicate that our two-stage update pipeline outperforms MSCKF across all data sequences.
\end{abstract}

\section{INTRODUCTION AND RELATED WORK}\label{Introduction}
Augmented Reality (AR) \cite{arCore, arKit} and Virtual Reality (VR) \cite{quest, visionpro} have emerged as transformative technologies, offering immersive experiences across various domains. One critical aspect shaping the effectiveness of these experiences is the incorporation of prior maps \cite{Zhang2025}. These maps provide essential spatial context, enabling accurate localization, tracking, and seamless integration of virtual elements into the real world. To achieve high quality and low latency user experiments, visual-inertial navigation systems (VINS) have received considerable popularity in AR/VR applications \cite{Mourikis2007, Geneva2020, Zhu2021, Zhang2023, Zhang2024} through utilizing low-cost and lightweight onboard cameras and inertial measurement units (IMUs).

Using VINS, the drift of the pose will accumulate and the uncertainty of the estimate will grow unbounded without global information, such as a prior map, GNSS measurement, or loop closure. However, GNSS may not be applicable indoors, and loop closure demands both a precise and efficient place recognition algorithm \cite{Galvez2012} and substantial memory space to store historical features \cite{Mur2015}. Consequently, prior map-based approaches have gained significant interest over the past few decades \cite{Kasyanov2017, Schneider2018, Geneva2022}. 

One of the key challenges that the map-based VINS literature tackles is relocalization based on one image and a prior map. Typically, descriptor-based methods are employed to establish 2D-3D correspondences by reprojecting map points to the image frame and matching them with features extracted from the image \cite{Liu2017}. Considering the increase in optimization complexity with map size, DBoW \cite{Galvez2012} represents an image by the statistic of different kinds of features from a visual vocabulary. Inspired by the DBoW, keyframe-based loop closure detection and localization are employed in ORB-SLAM \cite{Mur2015}. However, DBoW sacrifices spatial information about features, potentially leading to ambiguities or inaccuracies.

Recently, Neural radiance fields (NeRF) \cite{Ben2020} introduces a multilayer perceptron (MLP) to capture a radiance field representation of a scene. NICE-SLAM \cite{Zhu2022} proposes a dense simultaneous localization and mapping (SLAM) system that incorporates depth information and minimizes depth loss during training. Subsequently, NICER-SLAM \cite{Zhu2023} further incorporates monocular normal estimators and introduces a keyframe selection strategy. To expedite the training procedure, Nvidia Corp. proposes Instant-NGP \cite{Muller2022}, which utilizes a versatile new input encoding, enabling the use of a smaller network without compromising quality.

Among the NeRF-based localization literature, Loc-NeRF \cite{Maggio2023} introduces a real-time visual odometry (VO) algorithm by combining a particle filter with a NeRF prior map, which is trained offline. VO propagates the state of the pose, while rendered images from NeRF are used for updates. Due to the large number of particles and rendering costs from the NeRF model, Loc-NeRF operates at a much lower frequency of 0.6 Hz compared to the normal camera rate. NeRF-VINS \cite{Saimouli2023} proposes a real-time VINS framework by integrating OpenVINS \cite{Geneva2020} and NeRF \cite{Ben2020}, utilizing both real and rendered images for updates at varying frequencies. \textit{Nonetheless, none of the approaches above addresses pose initialization at the first timestamp. In other words, they assume the rigid transformation between the prior map frame and the online camera frame is known.} The only map-based relocalization work is iNeRF \cite{Yen2021}, which inverts the NeRF pipeline and proposes a gradient-based pose estimator by inputting a single image and a pre-trained NeRF model, but it heavily relies on a good initial guess.

\begin{figure*}[t]
	\centering
    \includegraphics[scale=0.44]{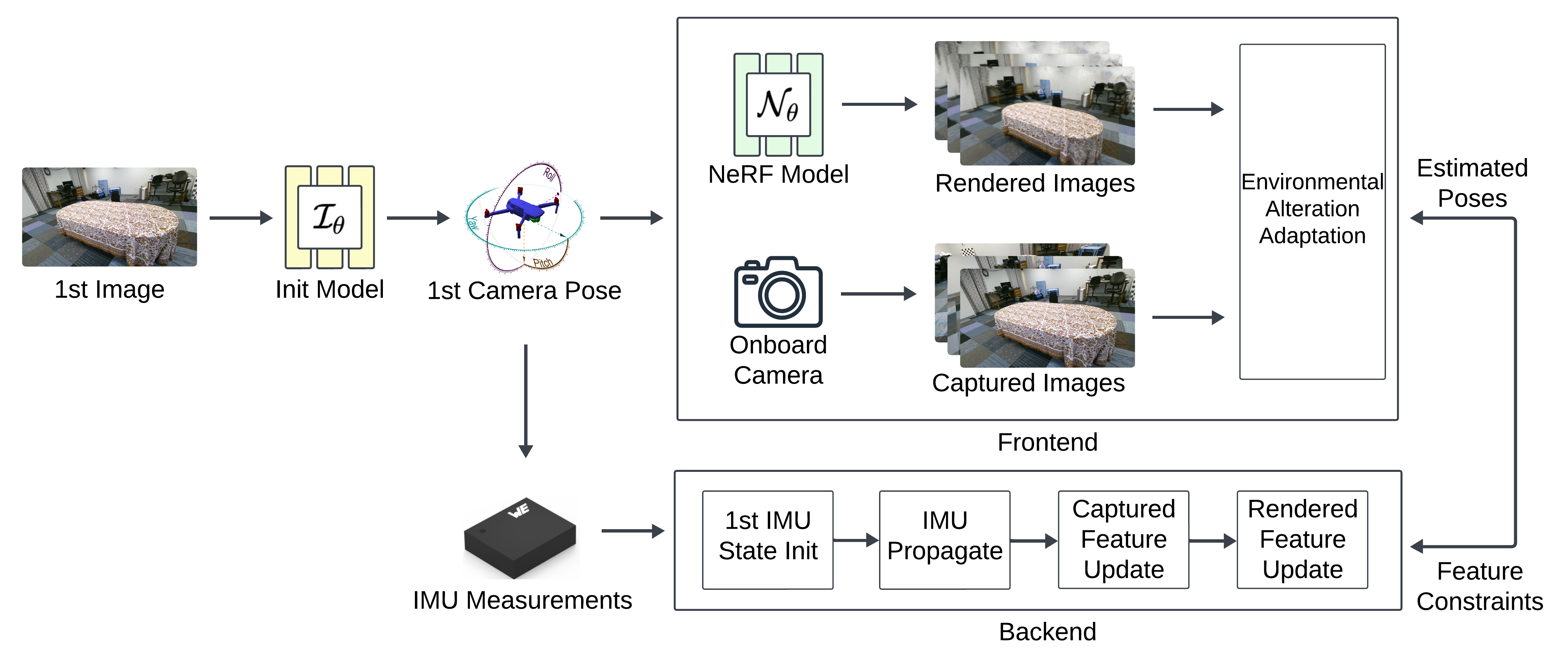}
    \vspace*{-15pt}
	\caption{An overview of our NeRF-VIO framework. Commencing with the initial captured image, the pre-trained initialization model (canary) outputs the first pose of the camera frame. Throughout online traveling, we leverage both the pre-trained NeRF model (mint) and the onboard camera to establish spatial constraints, facilitating the update of poses within the current sliding window. These updated poses then undergo further IMU propagation, serving as input to the NeRF model for the rendering of subsequent images.}
	\label{fig:illu}
    \vspace*{-15pt}
\end{figure*}

To tackle the challenges outlined above, this paper proposes a real-time map-based VIO algorithm with pose initialization as in Fig.~\ref{fig:illu}. Specifically, we introduce an initialization model to estimate the first IMU state and a NeRF model to update the poses during traveling. For the initialization, we introduce an MLP-based model, which establishes the correlations between images and poses without necessitating an initial guess. We define a novel loss function as the geodesic errors on $SE(3)$ and construct a left-invariant metric on $\mathfrak{se}(3)$. Additionally, we train a NeRF model capable of rendering images from new poses. During online traversal, the onboard camera captures images while the NeRF model renders images based on the estimated poses from VIO. These two pipelines operate concurrently but at different frequencies. Upon receiving a new rendered image, an object removal strategy is deployed to environmental alterations between the real world and the prior map. Subsequently, both captured and rendered images are utilized to update the robot's state. The main contributions of our work include:
\begin{itemize}
  \item We propose a novel relocalization model to initialize the first IMU state of VINS within the prior map. Our approach involves training an MLP to encode the map-pose information, and defines a novel loss function as the geodesic errors on $SE(3)$. Besides, we prove the left-invariant of our proposed loss function.
  \item We propose an online NeRF-based VIO algorithm by integrating a NeRF-based prior map and the proposed initialization model. This algorithm utilizes both captured images from an onboard camera and rendered images from NeRF to update the state.
  \item We validate our proposed method using a real-world AR dataset \cite{Chen2023}. The results demonstrate that our initialization model surpasses iNeRF \cite{Yen2021} in terms of accuracy and efficiency. Besides, our two-stage update VIO pipeline outperforms MSCKF \cite{Mourikis2007} across all table sequences.
\end{itemize}

\section{PROBLEM FORMULATION}\label{section:ProblemFormulation}
The goal of the NeRF-VIO is to estimate the 3D pose of the IMU frame \{$I$\} in the global frame \{$G$\} given an initialization model $\mathcal{I}_\mathbf{\theta}$ and a prior map $\mathcal{N}_\mathbf{\theta}$. Specifically, the prior map is encoded by a NeRF model, which is trained offline using the image-pose pairs from a different trajectory in the same environment. The initialization model is designed to relocalize a captured image from a prior map, while the NeRF model renders an image based on the current pose. The initialization model runs only once before online traversal. Note that the NeRF map resides within its own world frame \{$W$\}, which is not coincident with the global frame \{$G$\} before initialization. During online traveling, the robot updates its state using both images rendered from the NeRF map and the captured images from the onboard cameras in the camera frame \{$C$\}. We assume the sensor platform is pre-calibrated, ensuring that the relative transformation between the IMU frame and camera frame, denoted as $\mathbf{T}_I^C$, is already determined.

\subsection{NeRF-VIO State Vector}\label{section:StateVector}
To perform the NeRF-VIO, we include the IMU state, cloned IMU state, SLAM feature state, calibration state, and camera and IMU time-offset in the robot's state vector as: 
\begin{equation}
	\begin{aligned}
		\mathbf{x}=\left[\begin{array}{ccccc}
			\mathbf{x}_{I}^{\top} & \mathbf{x}_{Clone}^{\top} & \mathbf{x}_{f}^{\top} & \mathbf{x}_{Calib}^{\top} & t_{d}
		\end{array}\right]^{\top},
		\label{eq:state1}
	\end{aligned}
\end{equation}
where $t_{d} = t_{C} - t_{I}$ denotes the time-offset between the camera clock and the IMU clock, which treats the IMU clock as the true time. The state vector of IMU at time step $k$ can be written as:
\begin{equation}
	\begin{aligned}
		\mathbf{x}_{I_k}=\left[\begin{array}{ccccc}
		    { }_{G}^{I_k} \bar{q}^\top & 
			{ }^{G} \mathbf{p}_{I_k}^{\top} & 
		    { }^{G} \mathbf{v}_{I_k}^{\top} & 
		    \mathbf{b}_{g_k}^{\top} & 
		    \mathbf{b}_{a_k}^{\top}
		\end{array}\right]^{\top},
		\label{eq:state2}
	\end{aligned}
\end{equation}
where ${ }_{G}^{I_k} \bar{q}$ denotes the JPL unit quaternion from the global frame to the IMU frame. ${ }^{G} \mathbf{p}_{I_k}$ and ${ }^{G} \mathbf{v}_{I_k}$ are the position and velocity of IMU in the global frame. $\mathbf{b}_{g_k}$ and $\mathbf{b}_{a_k}$ represent the gyroscope and accelerometer biases. During inference, the robot maintains a sliding window with $m$ cloned IMU poses at time step $k$ written as:
\begin{equation}
	\begin{aligned}
		\mathbf{x}_{{Clone}_k}=\left[{ }_{G}^{I_k-1} \bar{q}^\top \quad 
		{ }^{G} \mathbf{p}_{I_{k-1}}^{\top} \quad
		...\quad
        { }_{G}^{I_{k-m}} \bar{q}^\top \quad
        { }^{G} \mathbf{p}_{I_{k-m}}^{\top}
		\right]^{\top}.
		\label{eq:state3}
	\end{aligned}
\end{equation}

In addition to the IMU state, the historical SLAM features are also stored in the state vector as:
\begin{equation}
	\begin{aligned}
        \mathbf{x}_{f}=\left[\begin{array}{ccc}
        { }^{G} \mathbf{p}_{f_1}^{\top} & ...&
        { }^{G} \mathbf{p}_{f_i}^{\top}
        \end{array}\right]^{\top},
		\label{eq:state4}
	\end{aligned}
\end{equation}
and spatial calibration between its IMU frame and camera frame will also be estimated as:
\begin{equation}
	\begin{aligned}
        \mathbf{x}_{{Calib}_k}=\left[\begin{array}{cc}
        { }_{I_k}^{C_k} \bar{q}^\top & 
        { }^{C_k} \mathbf{p}_{I_k}^{\top}
		\end{array}\right]^{\top}.
		\label{eq:state5}
	\end{aligned}
\end{equation}

\subsection{IMU Dynamic Model}\label{section:DynamicModel}
The measurement of the IMU linear acceleration ${ }^{I} \mathbf{a}_{m}$ and the angular velocity $	{ }^{I} \boldsymbol{\omega}_{m}$ are modeled as:
\begin{align}
	& { }^{I} \mathbf{a}_{m}={ }^{I} \mathbf{a}+{ }_{G}^{I} \mathbf{R}^{G} \mathbf{g}+\mathbf{b}_{a}+\mathbf{n}_{a}, \\
	& { }^{I} \boldsymbol{\omega}_{m}={ }^{I} \boldsymbol{\omega}+\mathbf{b}_{g}+\mathbf{n}_{g},
	\label{eq:model1}
\end{align}
where ${ }^{I} \mathbf{a}$ and ${ }^{I} \boldsymbol{\omega}$ are the true linear acceleration and angular velocity. $\mathbf{n}_{a}$ and $\mathbf{n}_{g}$ represent the continuous-time Gaussian noises that contaminate the IMU measurements. ${}^{G} \mathbf{g}$ denotes the gravity expressed in the global frame. Then, we adopt the IMU dynamic model as described in \cite{Zhang2023}.


After linearization, the continuous-time IMU error-state can be written as: 
\begin{equation}
	\begin{aligned}
		\dot{\tilde{\mathbf{x}}}(t) \simeq
			\mathbf{F}(t) \tilde{\mathbf{x}}(t) + \mathbf{G}(t) \mathbf{n}(t),
    \label{eq:model3}
    \end{aligned}
\end{equation}
where $\mathbf{F}(t)$ is the $15\times15$ continuous-time IMU error-state Jacobian matrix, $\mathbf{G}(t)$ is the $15\times12$ noise Jacobian matrix, and $\mathbf{n}(t) = \left[\mathbf{n}_{g}^{\top} \ \mathbf{n}_{wg}^{\top} \ \mathbf{n}_{a}^{\top} \ \mathbf{n}_{wa}^{\top}\right]^{\top}$ is the system noise with the covariance matrix $\mathbf{Q}$. Then, a standard EKF propagation is employed to mean and covariance.


\subsection{Initialization Model}\label{section:Init}
The purpose of the initialization model is to relocalize the IMU state at the first timestamp $\mathbf{x}_{I_0}$ from a prior map. In iNeRF\cite{Yen2021}, a 6 Degrees of Freedom (DoF) pose estimation is proposed, leveraging gradient descent to reduce the residual between pixels generated from a NeRF and those within an observed image. However, this approach heavily depends on a good initial guess.

In this work, we introduce a novel MLP-based initialization model that directly maps images to poses without needing an initial estimate. Specifically, we pre-collect images and groundtruth data from the same environment and train a 7-layer MLP. This MLP encodes prior environmental knowledge, taking a sequence of images as input and outputting 6-DoF poses. However, working with pose estimation in the context of \(\mathfrak{se}(3)\) requires careful consideration of the underlying Lie group structure. The lack of invariance in the standard inner product on \(\mathfrak{se}(3)\) has a potential drawback, as it can lead to discrepancies when comparing poses in different coordinate frames. Hence, we define a novel loss function using geodesic distance on \(SE(3)\) with a left-invariant metric. This ensures consistent and invariant pose comparisons, addressing the limitations tied to inner product-based metrics.

In the establishment of a left-invariant metric on \(SE(3)\), the definition involves specifying the inner product on the Lie algebra \(\mathfrak{se}(3)\) and subsequently extending it to a Riemannian metric through left translation. 
The left-invariant metric is established when the following equation holds\cite{gallier2011geometric}:
\begin{align}
    \left<\mathbf{x}_1, \mathbf{x}_2\right>_\mathbf{S} = \left<\mathbf{S}^{-1}\mathbf{x}_1, \mathbf{S}^{-1}\mathbf{x}_2\right>_\mathbf{I},
    \label{eq:init2}
\end{align}
where \( \left<\,\cdot\,,\,\cdot\,\right>_\mathbf{S} \) represents the inner product within the tangent space \( T_\mathbf{S} SE(3) \) at an arbitrary element \( \mathbf{S} \in SE(3) \), \(\mathbf{x}_1, \mathbf{x}_2 \in T_\mathbf{S}SE(3)\), \( \mathbf{I} \) denotes the identity, and \( (\,\cdot\,)^{-1} \) denotes the inverse operation in the Lie group \( SE(3) \).

Inspired by the definition of bi-invariant metric in \(SO(3)\), the metric in \(SE(3)\) can be constructed similarly. We define $\mathbf{M}_{\mathfrak{se}(3)} = \left[\begin{array}{cc} \mathbf{I}_{3\times 3} & \mathbf{a} \\ \mathbf{a}^T & 1 \end{array}\right]$, where \(\mathbf{a}\in\mathbb{R}^3\). The eigenvalues of \(\mathbf{M}_{\mathfrak{se}(3)}\) are \(1, 1 \pm \|\mathbf{a}\|_2\), and the condition \(\|\mathbf{a}\|_2 < 1\) ensures all eigenvalues are positive. Then, the metric on \(T_\mathbf{S}SE(3)\) is defined as: 
\begin{equation}
    \begin{aligned}
       \left<\mathbf{x}_1,\mathbf{x}_2\right>_{\mathbf{S}} = \operatorname{tr}(\mathbf{x}_1^T\mathbf{x}_2\mathbf{M}_{\mathfrak{se}(3)})
    \end{aligned}.
    \label{eq:init3}
\end{equation}
\begin{lemma}{Left-invariant: }
    The metric defined in (\ref{eq:init3}) is left-invariant.
\end{lemma}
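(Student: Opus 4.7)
The plan is to verify the defining identity (\ref{eq:init2}) by direct computation. Since $SE(3)$ is a matrix Lie group, the differential of left translation at the identity is simply left multiplication by $\mathbf{S}$, so every tangent vector $\mathbf{x}_i\in T_\mathbf{S}SE(3)$ admits a unique representation $\mathbf{x}_i=\mathbf{S}\xi_i$ with $\xi_i\in\mathfrak{se}(3)=T_\mathbf{I}SE(3)$, and the pullback $\mathbf{S}^{-1}\mathbf{x}_i$ appearing on the right-hand side of (\ref{eq:init2}) is exactly $\xi_i$. Substituting these expressions, the right-hand side of (\ref{eq:init2}) becomes $\operatorname{tr}(\xi_1^\top\xi_2\mathbf{M}_{\mathfrak{se}(3)})$, while the left-hand side, via $\mathbf{x}_i^\top=\xi_i^\top\mathbf{S}^\top$ and the definition (\ref{eq:init4}), expands to $\operatorname{tr}(\xi_1^\top\mathbf{S}^\top\mathbf{S}\,\xi_2\mathbf{M}_{\mathfrak{se}(3)})$. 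Left-invariance therefore reduces to showing that the inserted factor $\mathbf{S}^\top\mathbf{S}$ is annihilated when sandwiched between $\xi_1^\top$ and $\xi_2\mathbf{M}_{\mathfrak{se}(3)}$.

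The second step is to exploit the joint block sparsity of $\mathfrak{se}(3)$ and $SE(3)$ in their $4\times 4$ matrix representation. Elements of $\mathfrak{se}(3)$ have a zero bottom row, so the bottom row of $\xi_2\mathbf{M}_{\mathfrak{se}(3)}$ vanishes; symmetrically, the last column of $\xi_1^\top$ vanishes. Meanwhile the upper-left $3\times 3$ block of $\mathbf{S}^\top\mathbf{S}$ equals $\mathbf{R}^\top\mathbf{R}=\mathbf{I}$. Chaining these three observations shows that left-multiplying $\xi_2\mathbf{M}_{\mathfrak{se}(3)}$ by $\mathbf{S}^\top\mathbf{S}$ leaves its top three rows intact and only affects its bottom row, and this bottom row is then discarded by the zero last column of $\xi_1^\top$. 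Hence $\xi_1^\top\mathbf{S}^\top\mathbf{S}\,\xi_2\mathbf{M}_{\mathfrak{se}(3)}=\xi_1^\top\xi_2\mathbf{M}_{\mathfrak{se}(3)}$ as $4\times 4$ matrices, and equality of traces follows.

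The main obstacle is interpretive rather than technical: the formula (\ref{eq:init4}) is written with no explicit dependence on the basepoint $\mathbf{S}$, so one must first commit to viewing $\mathbf{x}_1,\mathbf{x}_2$ as the ambient $4\times 4$ matrix representatives of tangent vectors at $\mathbf{S}$ rather than as abstract coordinate vectors in a fixed basis of $\mathfrak{se}(3)$. Once that reading is fixed, the rest of the argument relies only on the sparsity pattern of $\mathfrak{se}(3)$ and on orthogonality of the rotation block of $\mathbf{S}$, and is in particular independent of the particular choice of the auxiliary vector $\mathbf{a}$ beyond the bound $\|\mathbf{a}\|_2<1$ already used to guarantee positive definiteness.
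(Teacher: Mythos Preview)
Your argument is correct and follows essentially the same route as the paper's proof: both verify (\ref{eq:init2}) by a direct block-matrix computation that hinges on the vanishing bottom row of $\mathfrak{se}(3)$-elements and the orthogonality $\mathbf{R}_s^\top\mathbf{R}_s=\mathbf{I}$ of the rotation block. The paper simply expands $(\mathbf{S}^{-1}\mathbf{x}_1)^\top(\mathbf{S}^{-1}\mathbf{x}_2)$ in block form and observes it coincides with $\mathbf{x}_1^\top\mathbf{x}_2$, which is exactly your observation that the inserted factor $\mathbf{S}^\top\mathbf{S}$ is absorbed.
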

\begin{proof}
    For \(\mathbf{S} = \left[\begin{array}{cc}
        \mathbf{R}_s & \mathbf{p}_s \\
        \mathbf{0} & 1
    \end{array}\right]\in SE(3)\), let \(\mathbf{x}_i \in T_\mathbf{S}SE(3)\), \(i=\{1,2\}\), be \(\mathbf{x}_i = \left[\begin{array}{cc}
        \lfloor\boldsymbol{\omega}_{i,s}\times \rfloor&  \mathbf{v}_{i,s}\\
        \mathbf{0} & 0
    \end{array}\right]\), we have
    \begin{align}
        \begin{aligned}
            \mathbf{S}^{-1}\mathbf{x}_i &= \left[\begin{array}{cc}
        \mathbf{R}_s^T & -\mathbf{R}_s^T\mathbf{p}_s \\
        \mathbf{0} & 1
    \end{array}\right] \left[\begin{array}{cc}
        \lfloor\boldsymbol{\omega}_{i,s}\times \rfloor&  \mathbf{v}_{i,s}\\
        \mathbf{0} & 0
    \end{array}\right]\\
    &=\left[\begin{array}{cc}
        \mathbf{R}_s^T\lfloor \boldsymbol{\omega}_{i,s} \times \rfloor &  \mathbf{R}_s^T\mathbf{v}_{i,s}\\
        \mathbf{0} & 0
    \end{array}\right].
        \end{aligned}\notag
    \end{align}
    Then, according to (\ref{eq:init3}), we have
    \begin{align}
        \begin{aligned} &\left<\mathbf{S}^{-1}\mathbf{x}_1,\mathbf{S}^{-1}\mathbf{x}_2\right>_\mathbf{I}\\
        =&\operatorname{tr}\left(\left[\begin{array}{c}
        \mathbf{R}_s^T\lfloor\boldsymbol{\omega}_{1,s} \times \rfloor \  \mathbf{R}_s^T\mathbf{v}_{1,s}\\
        \mathbf{0} \qquad\qquad 0
    \end{array}\right]^T\right.\\
    &\quad\;\;\left.\left[\begin{array}{c}
        \mathbf{R}_s^T\lfloor\boldsymbol{\omega}_{2,s} \times \rfloor \ \mathbf{R}_s^T\mathbf{v}_{2,s}\\
        \mathbf{0} \qquad\qquad 0
    \end{array}\right]\mathbf{M}_{\mathfrak{se}(3)}\right)\\
    =& \operatorname{tr}\left(\left[\begin{array}{cc}
        \lfloor\boldsymbol{\omega}_{1,s}\times\rfloor^T \lfloor\boldsymbol{\omega}_{2,s}\times\rfloor & \lfloor\boldsymbol{\omega}_{1,s}\times\rfloor^T\mathbf{v}_{2,s} \\
       \mathbf{v}_{1,s}^T\lfloor\boldsymbol{\omega}_{2,s}\times\rfloor  & \mathbf{v}_{1,s}^T\mathbf{v}_{2,s}
    \end{array}\right]\mathbf{M}_{\mathfrak{se}(3)}\right)\\
    =& \operatorname{tr}\left(\left[\begin{array}{cc}
        \lfloor\boldsymbol{\omega}_{1,s} \times \rfloor&  \mathbf{v}_{1,s}\\
        \mathbf{0} & 0
    \end{array}\right]^T\left[\begin{array}{cc}
        \lfloor\boldsymbol{\omega}_{2,s} \times \rfloor&  \mathbf{v}_{2,s}\\
        \mathbf{0} & 0
    \end{array}\right]\mathbf{M}_{\mathfrak{se}(3)}\right)\\
    =& \left<\mathbf{x}_1,\mathbf{x}_2\right>_\mathbf{S},
        \end{aligned}\notag
    \end{align}
    which means that the metric is left-invariant.
\end{proof}

We denote \(f_1, f_2\) as the corresponding local flows with
\begin{equation}
    \begin{aligned}
        \mathbf{x}_1 = \dot{f}_1(t), \quad \mathbf{x}_2 = \dot{f}_2(t), \quad f_1(t) = f_2(t) = \mathbf{S}
    \end{aligned}.
    \label{eq:init5}
\end{equation}
As \(f_i(t)\in SE(3)\), it can be written as:
\begin{equation}
    \begin{aligned}
        f_i(t) = \left[\begin{array}{cc}
            \mathbf{R}_i(t) & \mathbf{p}_i(t) \\
           \mathbf{0}  & 1
        \end{array}\right]
    \end{aligned},
    \label{eq:init6}
\end{equation}
and the corresponding twists at time \(t\) can be expressed as :
\begin{equation}
    \begin{aligned}
        \mathbf{T}_i = f_i^{-1}(t)\dot{f}_i(t) = \left[\begin{array}{cc}
            \lfloor\boldsymbol{\omega}_i\times\rfloor &  \mathbf{v}_i\\
           \mathbf{0}  & 0
        \end{array}\right]
    \end{aligned}.
    \label{eq:init7}
\end{equation}
With the definition of the metric in (\ref{eq:init3}), the inner product can be reformulated as: 
\begin{equation}
    \begin{aligned}
    \label{eq:init8}
        \left<\mathbf{x}_1,\mathbf{x}_2\right>_{\mathbf{S}} =&\operatorname{tr}(\dot{f}_1^T(t)\dot{f}_2(t)\mathbf{M}_{\mathfrak{se}(3)}) \\
        =&\operatorname{tr}(\mathbf{T}_1^T\mathbf{T}_2\mathbf{M}_{\mathfrak{se}(3)})\quad\longleftarrow\text{Left Invariance}\\
        =&\operatorname{tr}(\lfloor\boldsymbol{\omega}_1\times\rfloor^T \lfloor\boldsymbol{\omega}_2\times\rfloor)+\operatorname{tr}(\lfloor\boldsymbol{\omega}_1\times\rfloor^T \mathbf{v}_2\mathbf{a}^T)\\        &+\mathbf{v}_1^T\lfloor\boldsymbol{\omega}_2\times\rfloor \mathbf{a} + \mathbf{v}_1^T \mathbf{v}_2\\
        =&\left[\begin{array}{c}
             \boldsymbol{\omega}_1 \\
             \mathbf{v}_1
        \end{array}\right]^T \left[ \begin{array}{cc}
            2 \mathbf{I}_{3\times 3} & \lfloor \mathbf{a}\times\rfloor \\
            \lfloor-\mathbf{a}\times\rfloor & \mathbf{I}_{3\times 3}
        \end{array}\right]\left[\begin{array}{c}
             \boldsymbol{\omega}_2 \\
             \mathbf{v}_2
        \end{array}\right]\\
        := & \left<\left[\begin{array}{c}
             \boldsymbol{\omega}_1 \\
             \mathbf{v}_1
        \end{array}\right],\left[\begin{array}{c}
             \boldsymbol{\omega}_2 \\
             \mathbf{v}_2
        \end{array}\right]\right>_{\mathbf{M}_{\mathfrak{se}(3)}}.
    \end{aligned}
\end{equation}

Thus, the left-invariant metric on \(\mathfrak{se}(3)\) allows us to define the geodesic loss on \(SE(3)\) as follows:
\begin{align}
    \begin{aligned}
        d^2(\mathbf{S}_1, \mathbf{S}_2)&=\left<\text{log}_{\mathbf{S}_1}(\mathbf{S}_2),\text{log}_{\mathbf{S}_1}(\mathbf{S}_2)\right>_{\mathbf{S}_1}\\
        &=\left<\left[\begin{array}{cc}
            \lfloor\boldsymbol{\omega}\times\rfloor &  \mathbf{v}\\
           \mathbf{0}  & 0
        \end{array}\right],\left[\begin{array}{cc}
            \lfloor\boldsymbol{\omega}\times\rfloor &  \mathbf{v}\\
           \mathbf{0}  & 0
        \end{array}\right]\right>_{\mathbf{S}_1}\\
        &=\left<\left[\begin{array}{c}
             \boldsymbol{\omega} \\
             \mathbf{v}
        \end{array}\right],\left[\begin{array}{c}
             \boldsymbol{\omega} \\
             \mathbf{v}
        \end{array}\right]\right>_{\mathbf{M}_{\mathfrak{se}(3)}},
    \end{aligned}
    \label{eq:init9}
\end{align}
where \(\mathbf{S}_1, \mathbf{S}_2 \in SE(3)\), \(\text{log}_{\mathbf{S}_1}(\,\cdot\,)\) represents Lie logarithm at \(\mathbf{S}_1\), $\boldsymbol{\omega}$ and $\mathbf{v}$ denote the rotational velocity and translational velocity from $\mathbf{S}_2$ to $\mathbf{S}_1$, respectively.



With this loss function, we train an MLP-based initialization model to relocalize the pose of the first captured image in the prior map $\mathbf{T}_{W}^{C_0} := \left[{ }_{W}^{C_0} \bar{q}^\top, { }^{C_0} \mathbf{p}_{W}^{\top}\right]$. Based on the IMU integration in (\ref{eq:model3}), and the calibration parameters in (\ref{eq:state5}), the relative transformation from the first IMU pose to the first camera pose $\mathbf{T}_{C_0}^{I_0} := \left[{ }_{C_0}^{I_0} \bar{q}^\top, { }^{I_0} \mathbf{p}_{C_0}^{\top}\right]$ can be obtained. Now, the first IMU frame can be relocalized in the prior map frame as:
\begin{align}
    \begin{aligned}
        \mathbf{T}_{W}^{I_0} = \mathbf{T}_{C_0}^{I_0} * \mathbf{T}_{W}^{C_0}.
    \end{aligned}
    \label{eq:init10}
\end{align}
To further initialize $\left[{ }^{G} \mathbf{v}_{I_0}^{\top}, \, \mathbf{b}_{g_0}^{\top}, \, \mathbf{b}_{a_0}^{\top}\right]$, we collect a window of IMU readings from timestamp $0$ to the time received the first image, and initialize using the average of velocities and bias within this window.

\subsection{Measurement Update using Captured Images}
\label{section:CapturedUpdate}
The feature measurements captured from an onboard camera can be described by:
\begin{equation}
	\begin{aligned}
		\mathbf{z}_c = \Pi \left({ }^{C} \mathbf{p}_{f}\right)+\mathbf{w}_{c}, \quad \Pi\left(\left[
			x \ y \ z \right]^{\top}\right)=\left[
		\frac{x}{z} \quad \frac{y}{z} \right]^{\top},
		\label{eq:update_c1}
    \end{aligned}
\end{equation}
where ${ }^{C} \mathbf{p}_{f}$ denotes the position of this feature in the camera frame, and $\mathbf{w}_{c}$ denotes the corresponding measurement noise. Based on the estimated relative transformation between IMU and the global frame, and the estimated calibration parameters, ${ }^{C} \mathbf{p}_{f}$ can be expressed as:
\begin{equation}
	\begin{aligned}
		{ }^{C} \mathbf{p}_{f}={ }_{I}^{C} \mathbf{R}_{G}^{I} \mathbf{R}\left(\bar t \right)\left({ }^{G} \mathbf{p}_{f}-{ }^{G} \mathbf{p}_{I}\left(\bar t \right)\right)+{ }^{C} \mathbf{p}_{I},
		\label{eq:update_c2}
	\end{aligned}
\end{equation}
where $\bar t = t-t_{d}$ is the exact camera time between the global and IMU frame.

To update a particular captured feature, we first gather all measurements of this feature within the current sliding window. Then, the measurement residuals are computed between each observation and the registered feature. By stacking all measurement residuals, we linearize them at the estimated IMU pose as follows:
\begin{align}
	\mathbf{r}_c =  \mathbf{h}_c\left(\tilde{\mathbf{x}}, {}^{G}  \tilde{\mathbf{p}}_{f}\right)+\mathbf{w}_{c} 
	\simeq \mathbf{H}_{x, c}\tilde{\mathbf{x}}+\mathbf{H}_{f, c}{ }^{G}\tilde{\mathbf{x}}_{f}+\mathbf{w}_{c},
	\label{eq:update_c3}
\end{align}
where $\mathbf{H}_{x, c}$ and $\mathbf{H}_{f, c}$ denote the state and measurement Jacobians of captured features, respectively. $\mathbf{w}_{c}$ denotes the noise vector corresponding to the captured feature. Then, the standard MSCKF update \cite{Mourikis2007} is applied.

\subsection{Measurement Update using Rendered Images}
\label{section:RenderedUpdate}
To incorporate the observations from the rendered image and update the state vector, we aim to update the state corresponding to the pose at which the image was rendered. However, due to factors such as rendering delay and the fact that the camera pose has already been updated based on captured features, we opt for the closest camera frame \{$CC$\} relative to the original rendered one as shown in Fig.~\ref{fig:cc_time}. The measurement function of rendered features can be formulated as:
\begin{equation}
	\begin{aligned}
		\mathbf{z}_r = \Pi \left({ }^{CC} \mathbf{p}_{f}\right)+\mathbf{w}_{r},
		\label{eq:update_r1}
    \end{aligned}
\end{equation}
where $\mathbf{w}_{r}$ denotes the rendered noise, and 
\begin{equation}
	\begin{aligned}
		{ }^{CC} \mathbf{p}_{f}=&{ }_{I}^{CC} \mathbf{R}_{G}^{I} \mathbf{R}\left(\bar t \right)\left(\left({ }^{G}_{W}\mathbf{R} { }^{W}\mathbf{p}_{f} + { }^{G} \mathbf{p}_{W}\right)-{ }^{G} \mathbf{p}_{I}\left(\bar t \right)\right)\\
        &+{ }^{C} \mathbf{p}_{I}.
		\label{eq:update_r2}
	\end{aligned}
\end{equation}
The error state Jacobians w.r.t. the pose of IMU can be expressed as:
\begin{equation}
	\begin{aligned}
		\frac{\partial \tilde{\mathbf{z}}_r}{\partial \delta{}_{G}^{I} \mathbf{\theta}} = \mathbf{J}_\Pi { }_{I}^{CC} \mathbf{R} \lfloor {}_{G}^{I} \mathbf{R}\left(\bar t \right) & \left( { }^{G}_{W}\mathbf{R} { }^{W}\mathbf{p}_{f} + { }^{G} \mathbf{p}_{W} - { }^{G} \mathbf{p}_{I}\left(\bar t \right)\right)  \times\rfloor, \\
		\frac{\partial \tilde{\mathbf{z}}_r}{\partial { }^{G} \tilde{\mathbf{p}}_{I}} &= - \mathbf{J}_\Pi { }_{I}^{CC} \mathbf{R}_{G}^{I} \mathbf{R}\left(\bar t \right),
		\label{eq:update_r3}
	\end{aligned}
\end{equation}
where $\mathbf{J}_\Pi$ denotes the Jacobian of perspective model.
\begin{figure}[t]
	\centering
    \includegraphics[width=0.48\textwidth]{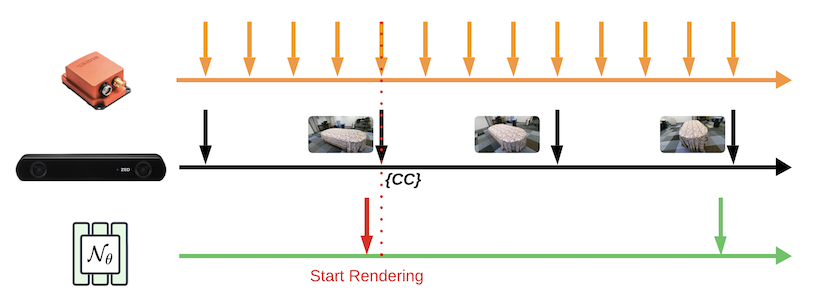}
    \vspace*{-15pt}
	\caption{The three timelines denote data received from different sensors and the NeRF model. We define the closest camera frame \{$CC$\} as the frame closest in time to when the NeRF model begins rendering.}
	\label{fig:cc_time}
    \vspace*{-15pt}
\end{figure}

Note that the rigid transformation $\left({ }^{G}_{W}\mathbf{R}, { }^{G} \mathbf{p}_{W}\right)$ from the initialization model is not perfect, but (\ref{eq:update_r2}) has not modeled the initialization noise into it. Thus, we inflate the noise as:
\begin{equation}
	\begin{aligned}
		\mathbf{w}_{r}^{\prime} = \mathbf{w}_{r} + &\frac{\partial \mathbf{z}_r}{\partial \delta{ }^{G}_{W}\mathbf{\theta}} * { }^{G}_{W} \tilde{\theta} + \frac{\partial \mathbf{z}_r}{\partial { }^{G} \tilde{\mathbf{p}}_{W}} * { }^{G} \tilde{\mathbf{p}}_{W},
        \label{eq:update_r4}
	\end{aligned}
\end{equation}
where
\begin{equation}
	\begin{aligned}  
		\frac{\partial \mathbf{z}_r}{\partial { }^{G}_{W}\mathbf{R}} &= \mathbf{J}_\Pi { }_{I}^{CC} \mathbf{R} { }_{G}^{I} \mathbf{R} \lfloor { }^{G}_{W}\mathbf{R} { }^{W}\mathbf{p}_{f} \times\rfloor,\\
		\frac{\partial \mathbf{z}_r}{\partial { }^{G} \mathbf{p}_{W}} &= \mathbf{J}_\Pi { }_{I}^{CC} \mathbf{R}_{G}^{I} \mathbf{R}\left(\bar t \right).
		\label{eq:update_r5}
	\end{aligned}
\end{equation}
Then, the linearized model can be expressed as:
\begin{align}
	\mathbf{r}_r = \mathbf{h}_r\left(\tilde{\mathbf{x}}, {}^{W}  \tilde{\mathbf{p}}_{f}\right)+\mathbf{w}_{r}^{\prime} 
	\simeq \mathbf{H}_{x, r}\tilde{\mathbf{x}}+\mathbf{H}_{f, r}{ }^{W}\tilde{\mathbf{x}}_{f}+\mathbf{w}_{r}^{\prime},
	\label{eq:update_r6}
\end{align}
and an MSCKF update \cite{Mourikis2007} will be employed.

\section{EXPERIMENTS AND RESULTS}\label{section:Experiments}
In this section, we validate the performance of NeRF-VIO initialization and localization using a real-world AR table dataset \cite{Chen2023}. The data sequences 1-3 record around a table adorned with a textured tablecloth. Sequence 4 introduces minor alterations by incorporating additional objects onto the table, while table sequences 5-7 place an additional whiteboard to simulate the large environment change. Throughout the training, sequence 1 is utilized to train both the initialization model and the NeRF model on an RTX 4090 GPU.


\begin{table}[t]
	\centering
	\caption{The $L_2$ norm of the orientation / position (degrees / centimeters) of the initialization pose, utilizing iNeRF and our NeRF-VIO across AR table sequences 2-5. For iNeRF, we use different initial guesses: \textbf{(a)} a $10$-degree rotational error and a $20$-centimeter translation error for each axis. \textbf{(b)} a $2$-degree rotational error and a $5$-centimeter translation error for each axis.}
	\begin{tabular}{ c | c c c }
		\toprule
        & \textbf{iNeRF (a)} & \textbf{iNeRF (b)} & \textbf{NeRF-VIO}\\
		\midrule
        \textbf{Table 2} & 20.33 / 23.39 & 2.81 / 5.49 & 2.02 / 1.48\\
		\textbf{Table 3} & 9.95 / 38.37 & 2.70 / 4.79 & 2.71 / 2.04\\ 
		\textbf{Table 4} & 10.61 / 22.95 & 3.35 / 6.55  & 3.16 / 1.90\\
        \textbf{Table 5} & -             & 5.47 / 8.09  & 5.21 / 4.76\\
		\bottomrule
	\end{tabular}
	\label{table:init_acc_result}
\end{table}
\begin{table}[t]
	\centering
    \vspace*{-5pt}
	\caption{The latency (seconds) of pose generation, utilizing iNeRF and our NeRF-VIO across AR table sequences 2-5.}
	\begin{tabular}{ c | c c c c }
		\toprule
		& \textbf{Table 2} & \textbf{Table 3} & \textbf{Table 4} & \textbf{Table 5}\\
		\midrule
		\textbf{iNeRF} & 15.46 & 15.55 & 15.64 & -\\ 
		\textbf{NeRF-VIO} & 0.11 & 0.12 & 0.13 & 0.11\\
		\bottomrule
	\end{tabular}
	\label{table:init_latency_result}
    \vspace*{-15pt}
\end{table}

\begin{figure*}[t]
    \centering
    \includegraphics[width=0.24\textwidth]{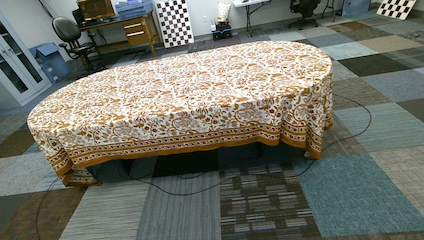}
    \includegraphics[width=0.24\textwidth]{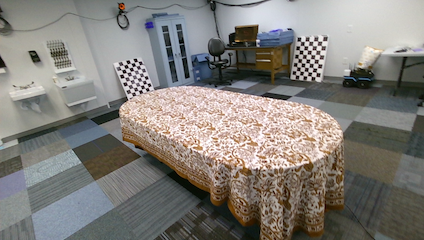}
    \includegraphics[width=0.24\textwidth]{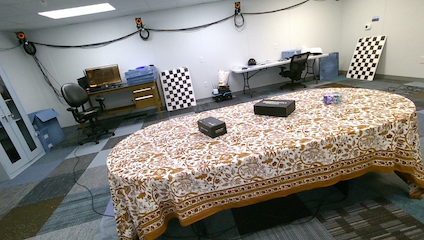}
    \includegraphics[width=0.24\textwidth]{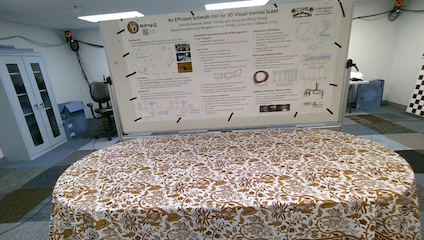}\\
    \vspace{0.1cm}
    \includegraphics[width=0.24\textwidth]{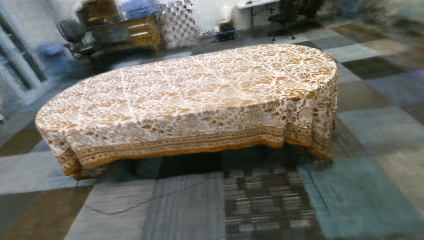}
    \includegraphics[width=0.24\textwidth]{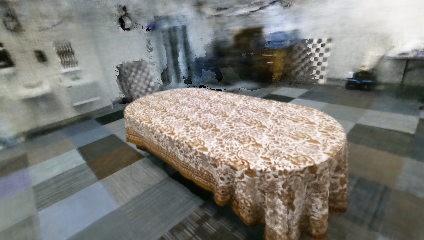}
    \includegraphics[width=0.24\textwidth]{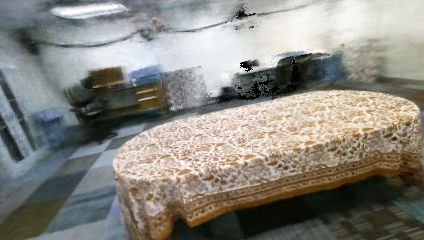}
    \includegraphics[width=0.24\textwidth]{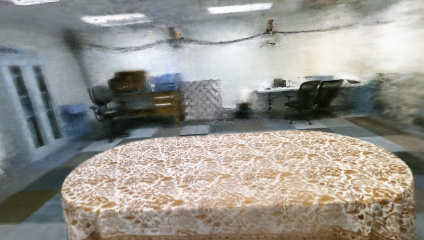}
    \vspace*{-5pt}
	\caption{Comparison of NeRF-rendered images to ground truth under normal / minor-change / large-change environments. The top row displays captured images from the closest camera frame, while the second row showcases rendered images at the same positions and orientations. Columns correspond to Table 2-5, progressing from left to right.}
	\label{fig:tr_comp}
\end{figure*}

\begin{table*}[t]
	\centering
	\caption{The ATE of the orientation / position (degrees / meters) of three VIO methods in different AR Table sequences.}
	\begin{tabular}{ c | c c c c c c | c }
		\toprule
		& \textbf{Table 2} & \textbf{Table 3} & \textbf{Table 4} & \textbf{Table 5} & \textbf{Table 6} & \textbf{Table 7} & \textbf{Average}\\
		\midrule
		MSCKF & 1.142 / 0.034 & 0.750 / 0.065  & 2.095 / 0.077  & 0.656 / 0.047 & 0.961 / 0.049  & 1.161 / 0.069  & 1.128 / 0.057 \\
		NeRF-VIO & \textbf{0.686} / \textbf{0.023} & 0.651 / 0.049 & 0.886 / \textbf{0.038}  & \textbf{0.519} / \textbf{0.028} &  0.737 / 0.036  & 0.982 / 0.049 & 0.744 / 0.037 \\
		NeRF-VIO (GT Init) & 0.750 / 0.024 &  \textbf{0.517} / \textbf{0.046} & \textbf{0.766} / 0.040 & 0.534 / 0.031 & \textbf{0.564} / \textbf{0.028} & \textbf{0.896} / \textbf{0.043} & \textbf{0.671} / \textbf{0.035}\\
		\bottomrule
	\end{tabular}
	\label{table:loc_result}
    \vspace*{-10pt}
\end{table*}


\subsection{Initialization Performance}\label{section:InitPerformance}
The initialization model is trained as a 7-layer MLP using AR table sequence 1. RGB images are extracted from a Rosbag, which records from an Intel RealSense D455 camera. The IMU groundtruth are captured from the Vicon system, and camera intrinsic and camera-IMU extrinsic parameters are calibrated using Kalibr \cite{Rehder2016}. Before forwarding the images to MLP, the corresponding camera poses are determined using 4th-order Runge-Kutta interpolation. RGB images are then converted to grayscale, normalized to a range between 0 and 1, and processed through the MLP.

To compare our initialization model with iNeRF, we leverage our pre-trained NeRF model from NeRF-PyTorch\footnote{\url{https://github.com/yenchenlin/nerf-pytorch}.} as a prior map. Pose estimation of the first images in sequences 2-5 is conducted using iNeRF and our initialization model. Specifically, we initialized iNeRF with two different initial guesses: \textbf{(a)} a $10$-degree rotational error and a $20$-centimeter translation error for each axis. \textbf{(b)} a $2$-degree rotational error and a $5$-centimeter translation error for each axis. We evaluate the $L_2$ norm of position and orientation between estimated values and groundtruth of those two models in Table.~\ref{table:init_acc_result}, while latency is provided in Table.~\ref{table:init_latency_result}. We can figure out that our NeRF-VIO initialization model demonstrates superior performance over iNeRF across all four trajectories, exhibiting significantly lower latency. This can be attributed to iNeRF's reliance on gradient-based optimization, which needs to converge to local minima iteratively. Notable that we preload all models before initialization, thus the model loading time is not contained in Table.~\ref{table:init_latency_result}. 

Additionally, iNeRF relies on a NeRF prior map, which renders it vulnerable to significant environmental changes, leading to relocalization failures as observed in sequence 5. In contrast, our model exhibits robustness to minor environmental alterations and retains the capability to reconstruct images even when a large environment changes by applying a grid-based Structural Similarity Index (SSIM) \cite{Zhou2004} algorithm. Specifically, we compute the SSIM between the rendered image and the image from \{$CC$\} frame, and only grids with SSIM values $\geq0.8$ are selected for FAST \cite{Boretti2022} feature extraction. This methodology ensures consistency between the NeRF map and the real map while maintaining robustness against environmental changes.


\subsection{VIO Performance}\label{section:VIOPerformance}
The NeRF model is constructed with 8 fully connected layers, followed by concatenation with the viewing direction of the camera, and passed through an additional fully connected layer. In addition to the image preprocessing outlined in Sec.~\ref{section:InitPerformance}, we further rotate the camera by 180 degrees along the x-axis to maintain consistent rendering direction. To evaluate the capability of NeRF-VIO to adapt to small or large environmental changes, a comparison of rendered images and ground truth is presented in Fig.~\ref{fig:tr_comp}, utilizing data from AR Table 2-5.

In assessing the impact of rendered image updates and initialization models on VIO performance, we compare our NeRF-VIO with MSCKF \cite{Mourikis2007}, and NeRF-VIO (GT Init). NeRF-VIO (GT Init) is the same as NeRF-VIO but initialized from ground truth. To ensure a fair comparison, we employ the same seed and an equal number of features for all four methods. For NeRF-VIO, we run the NeRF rendering at 2Hz and the onboard camera at 30Hz on an Intel 9700K CPU. Table.~\ref{table:loc_result} presents the absolute trajectory error (ATE) from Table 2-7. It is evident that our NeRF-VIO outperforms MSCKF across all sequences and achieves performance nearly on par with the groundtruth initialization.


\section{CONCLUSIONS}\label{Conclusions}
In this paper, we have proposed a map-based visual-inertial odometry algorithm with pose initialization. We define a novel loss function for the initialization model and train an MLP model to recover the pose from a prior map. Besides, we proposed a two-stage update pipeline integrated into the MSCKF framework. Through the evaluation on a real-world AR dataset, we demonstrate that our NeRF-VIO outperforms all baselines in terms of both accuracy and efficiency.
\bibliography{references}
\bibliographystyle{IEEEtran}

\end{document}